\PassOptionsToPackage{table}{xcolor}
\documentclass[11pt]{article}
\usepackage{arxiv}

\usepackage{latexsym}
\usepackage[T1]{fontenc}
\usepackage[utf8]{inputenc}
\usepackage{microtype}
\usepackage{inconsolata}
\usepackage{natbib}

\usepackage{amsmath,amsfonts,bm}
\usepackage{xcolor}

\def\1{\bm{1}}

\def\cR{{\mathcal{R}}}

\DeclareMathAlphabet{\mathsfit}{\encodingdefault}{\sfdefault}{m}{sl}
\SetMathAlphabet{\mathsfit}{bold}{\encodingdefault}{\sfdefault}{bx}{n}

\def\Pr{\mathbb{P}}

\makeatletter
\def\thickhline{%
  \noalign{\ifnum0=`}\fi\hrule \@height \thickarrayrulewidth \futurelet
   \reserved@a\@xthickhline}
\def\@xthickhline{\ifx\reserved@a\thickhline
               \vskip\doublerulesep
               \vskip-\thickarrayrulewidth
             \fi
      \ifnum0=`{\fi}}
\makeatother

\newlength{\thickarrayrulewidth}
\NewDocumentCommand{\yl}{ mO{} }{\textcolor{brown}{\textsuperscript{\textit{YL}}\textrm{{\small[#1]}}}}

\NewDocumentCommand{\nk}{ mO{} }{\textcolor{orange}
{\textsuperscript{\textit{NK}}\textrm{{\small[#1]}}}}

\NewDocumentCommand{\nikki}{ mO{} }{\textcolor{purple}{\textsuperscript{\textit{Nikki}}\textrm{{\small[#1]}}}}

\usepackage{graphbox}

\def\cR{\mathcal{R}}

\usepackage{booktabs}
\usepackage{tabularx}

\usepackage[utf8]{inputenc}
\usepackage{amsmath}       %
\usepackage{amssymb}       %
\usepackage{xcolor}        %
\usepackage{mdframed}      %
\usepackage{bbm}           %

\usepackage{enumitem}

\usepackage{spverbatim}

\newtheorem{theorem}{Theorem}[section]
\newtheorem{proposition}[theorem]{Proposition}

\newtheorem{definition}[theorem]{Definition}

\usepackage{xcolor}

\usepackage{tikz}
\usepackage{pgfplots}
\usetikzlibrary{arrows.meta,positioning,calc}
\pgfplotsset{compat=1.18}

\usepackage{algorithm}
\usepackage{algorithmic}

\usepackage{multirow}

\usepackage{tikz}
\usetikzlibrary{arrows.meta,positioning,matrix,fit,calc}

\newenvironment{proof}[1][Proof]{\begin{trivlist}
\item[\hskip \labelsep {\bfseries #1}]}{\qed\end{trivlist}}

\usepackage{graphicx}
\usepackage{array}
\usepackage{xcolor}
\usepackage{tikz}
\usetikzlibrary{positioning,calc,arrows.meta}

\definecolor{headerblue}{RGB}{47,95,170}
\definecolor{arrowblue}{RGB}{35,50,85}
\definecolor{lightrow}{RGB}{242,245,250}
\definecolor{goldfill}{RGB}{245,210,90}
\definecolor{goldtext}{RGB}{160,60,20}
\definecolor{clusterblue}{RGB}{210,220,235}

\definecolor{rankdark}{RGB}{46,125,50}
\definecolor{rankmid}{RGB}{102,187,106}
\definecolor{ranklight}{RGB}{165,214,167}
\definecolor{rankvlight}{RGB}{220,237,200}
\definecolor{rankbad}{RGB}{235,235,235}

\usepackage{graphicx}
\usepackage{array}
\usepackage{xcolor}
\usepackage{tikz}
\usetikzlibrary{positioning,calc,fit,arrows.meta}

\definecolor{arrowblue}{RGB}{35,50,85}
\definecolor{rankdark}{RGB}{46,125,50}
\definecolor{rankmid}{RGB}{102,187,106}
\definecolor{ranklight}{RGB}{165,214,167}
\definecolor{rankvlight}{RGB}{220,237,200}
\definecolor{rankbad}{RGB}{235,235,235}
\definecolor{goldfill}{RGB}{245,210,90}
\definecolor{goldtext}{RGB}{160,60,20}

\usepackage{graphicx}
\usepackage{array}
\usepackage{xcolor}
\usepackage{tikz}
\usetikzlibrary{positioning,calc,arrows.meta}

\definecolor{arrowblue}{RGB}{35,50,85}
\definecolor{rankdark}{RGB}{46,125,50}
\definecolor{rankmid}{RGB}{102,187,106}
\definecolor{ranklight}{RGB}{165,214,167}
\definecolor{rankvlight}{RGB}{220,237,200}
\definecolor{rankbad}{RGB}{245,245,245}
\definecolor{goldfill}{RGB}{245,210,90}
\definecolor{goldtext}{RGB}{160,60,20}

\definecolor{rankdark}{RGB}{94,145,98}
\definecolor{rankmid}{RGB}{137,182,140}
\definecolor{ranklight}{RGB}{176,209,178}
\definecolor{rankvlight}{RGB}{218,233,219}
\definecolor{rankbad}{RGB}{242,242,242}

\usepackage{pgfplots}
\pgfplotsset{compat=1.18}

\definecolor{sagegreen}{RGB}{122,140,124}

\usepackage[T1]{fontenc}
\usepackage{textcomp}
\usepackage{upquote}
\usepackage{xcolor}
\usepackage{listings}
\usepackage{mdframed}
\usepackage{enumitem}

\lstdefinestyle{promptpython}{
  language=Python,
  basicstyle=\ttfamily\footnotesize,
  keywordstyle=\color{blue!60!black},
  commentstyle=\color{green!40!black},
  stringstyle=\color{orange!50!black},
  showstringspaces=false,
  breaklines=true,
  frame=single,
  rulecolor=\color{black!35},
  framerule=0.5pt,
  backgroundcolor=\color{black!2},
  tabsize=2,
  xleftmargin=4pt,
  xrightmargin=4pt,
  framexleftmargin=4pt,
  framexrightmargin=4pt,
  framextopmargin=4pt,
  framexbottommargin=4pt,
  upquote=true
}

\definecolor{appendixframe}{RGB}{150,150,150}
\definecolor{appendixfill}{RGB}{248,248,248}

\newmdenv[
  linecolor=appendixframe,
  backgroundcolor=appendixfill,
  linewidth=0.6pt,
  roundcorner=2pt,
  skipabove=0.5\baselineskip,
  skipbelow=0.8\baselineskip,
  innerleftmargin=8pt,
  innerrightmargin=8pt,
  innertopmargin=7pt,
  innerbottommargin=7pt,
  splittopskip=7pt,
  splitbottomskip=7pt
]{promptbox}

\lstdefinestyle{appendixcode}{
  language=Python,
  basicstyle=\ttfamily\scriptsize,
  keywordstyle=\color{blue!60!black},
  commentstyle=\color{green!40!black},
  stringstyle=\color{orange!50!black},
  breaklines=true,
  columns=fullflexible,
  keepspaces=true,
  showstringspaces=false,
  upquote=true,
  frame=single,
  rulecolor=\color{appendixframe},
  framerule=0.6pt,
  backgroundcolor=\color{appendixfill},
  xleftmargin=4pt,
  xrightmargin=4pt,
  framexleftmargin=4pt,
  framexrightmargin=4pt,
  framextopmargin=4pt,
  framexbottommargin=4pt,
  aboveskip=0.7\baselineskip,
  belowskip=0.7\baselineskip
}

\usepackage{xcolor}
\usepackage{booktabs}
\usepackage{multirow}
\usepackage{colortbl}

\usepackage{array}

\usepackage{graphicx}
\usepackage{booktabs}
\usepackage{multirow}
\usepackage{array}
\usepackage{xcolor}

\def\code{\mathsf{c}}

\providecommand{\qed}{\hfill\ensuremath{\square}}
\usepackage{url}
\usepackage{booktabs}
\usepackage{hyperref}

\definecolor{darkblue}{rgb}{0, 0, 0.5}
\hypersetup{colorlinks=true, citecolor=darkblue, linkcolor=darkblue, urlcolor=darkblue}

\title{Reinforcement Learning without Ground-Truth Solutions can Improve LLMs}

\renewcommand{\undertitle}{}
\date{}

\author{%
\textbf{Yingyu Lin}\({}^{1,2}\)\thanks{Equal contribution.}\thanks{Work done while interning at Snowflake AI Research.} \quad
\textbf{Qiyue Gao}\({}^{1}\)\footnotemark[1] \quad
\textbf{Nikki Lijing Kuang}\({}^{2}\)\footnotemark[1] \quad
\textbf{Xunpeng Huang}\({}^{1}\) \quad
\textbf{Kun Zhou}\({}^{1}\)\\
\textbf{Tongtong Liang}\({}^{1}\) \quad
\textbf{Zhewei Yao}\({}^{2}\) \quad
\textbf{Yi-An Ma}\({}^{1}\)\thanks{Co-senior authors.} \quad
\textbf{Yuxiong He}\({}^{2}\)\footnotemark[3]\\[0.35em]
\({}^{1}\)University of California, San Diego \quad
\({}^{2}\)Snowflake AI Research
}

\begin{document}

\maketitle

\begin{abstract}

Reinforcement Learning with Verifiable Rewards (RLVR) for training LLMs typically rely on ground-truth answers to assign rewards, limiting their applicability to tasks where the ground-truth solution is unknown. We introduce a \textbf{R}anking-\textbf{i}nduced \textbf{VER}ifiable framework (RiVER) that trains LLMs on score-based optimization tasks without ground-truth solutions, using deterministic execution feedback as continuous-valued supervision. When applying group-relative RL to such continuous rewards, we identify two key challenges: \emph{scale dominance}, where uncalibrated score magnitudes across test instances distort policy updates, and \emph{frequency dominance}, where repeatedly sampled suboptimal solutions can outweigh rare but stronger candidates. RiVER addresses these challenges with calibrated reward shaping that uses instance-wise comparisons and emphasizes top-ranked solvers while retaining bounded feedback for other valid solutions. We train on 12 AtCoder Heuristic Contest tasks and evaluate on Algorithm Engineering Benchmark (ALE-Bench), LiveCodeBench, and USACO. RiVER advances Qwen3-8B and GLM-Z1-9B-0414 by 8.9\% and 9.4\% in ALE rating rank. More importantly, despite training exclusively on score-based tasks without any ground-truth solutions, RiVER also improves the backbones across exact-solution benchmarks such as LiveCodeBench and USACO by an absolute average improvement of 2.4\% and 3.5\%. By contrast, baselines trained with raw execution scores improve ALE rating but fail to transfer to exact-solution benchmarks. These results suggest that score-based optimization tasks, combined with proper reward calibration, can serve as effective training environments for general coding ability without ground-truth solutions.
\end{abstract}

\begin{figure}[t]
    \centering
    \includegraphics[width=0.99\textwidth]{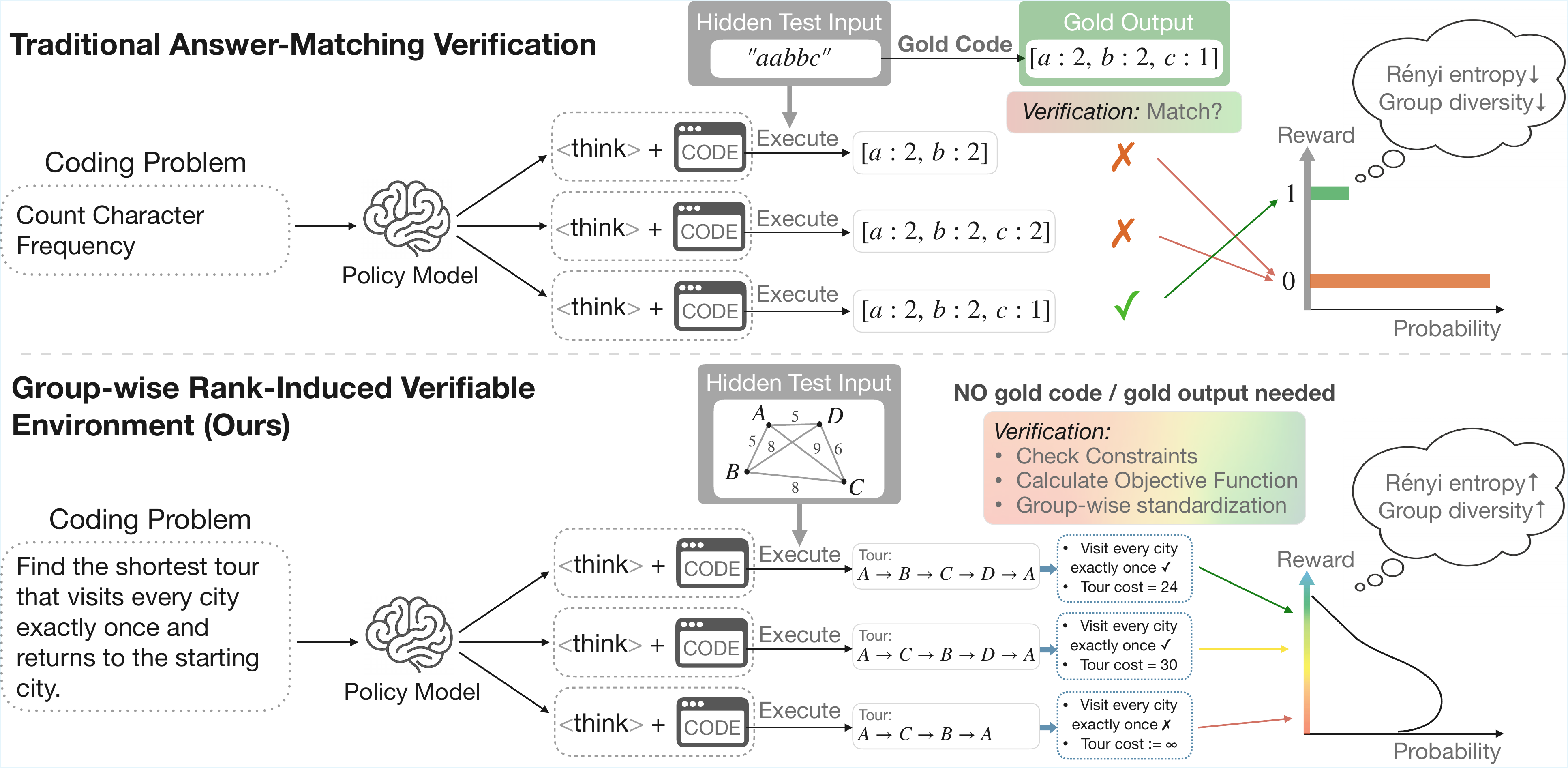}
    \caption{Comparison between traditional answer-matching verification and our group-wise rank-induced verifiable environment. Traditional verification relies on gold code or gold outputs and yields sparse binary rewards, while our method assigns relative rewards by comparing candidate solutions through constraint checking and objective evaluation, without requiring ground-truth solutions.}
    \label{fig:placeholder}
\end{figure}

\section{Introduction}

Reinforcement Learning with Verifiable Rewards (RLVR) has become a central recipe for improving LLM reasoning~\citep{wei2022chain, lambert2024tulu, wang2025octothinker, kumar2025llm, zhang2025interplay}. In mathematics and coding, this paradigm has enabled strong post-training results by rewarding outputs that match a known answer or pass unit tests. However, the dominant form of RLVR remains closely tied to answer matching. A generated response is typically rewarded if it satisfies a \textit{binary} correctness criterion. This makes RLVR powerful but narrow. Many important reasoning and coding tasks do not have a single ground-truth solution. In algorithm design, optimization, and planning, there may be many feasible solutions with different qualities, and the optimal solution may be unknown or computationally intractable. Such tasks fall outside the standard answer-matching formulation, even though they are often still verifiable in a broader sense: candidate solutions can be executed, checked for feasibility, and \textit{compared} by an objective function.

This paper studies whether ground-truth-free verification can provide an effective training signal for LLMs. We focus on open-ended, score-based algorithm-engineering contests, such as AtCoder Heuristic Contests~\citep{atcoder} and Topcoder Marathon Matches~\citep{topcoder_marathon_match_tournament}. In these tasks, the policy model must write programs that produce high-scoring approximate solutions to complex optimization problems. Unlike answer-matching RLVR, the environment provides no reference program, gold output, or certified optimum. Instead, multiple sampled solvers are executed on the same hidden instances; invalid outputs are rejected by constraint checks, and feasible outputs are evaluated by a task-specific objective function. The training signal comes from asking which sampled solver performs better under execution, not whether it matches a precomputed solution. This turns optimization tasks into a scalable source of verifiable supervision: the model need not know an optimal solution, as long as the environment can check feasibility and compare sampled candidates.

Naively using these continuous-valued objective scores for reinforcement learning, however, is not sufficient. Compared with binary correctness rewards, score-based feedback is more fine-grained and potentially more informative, since it can distinguish different levels of solution quality among feasible candidates. Yet this additional information is not automatically suitable for policy-gradient optimization. We identify two key challenges. First, raw objective scores are often uncalibrated across hidden instances. Even within the same optimization problem, score magnitudes can vary with instance size, graph structure, weight scale, or outliers. Aggregating raw scores across instances can therefore make policy updates depend on arbitrary numerical scales rather than robust improvements in solution quality. We refer to this effect as scale dominance: instances with larger score ranges dominate the reward signal regardless of whether they provide more meaningful learning information. Second, group-relative learning can underemphasize rare high-quality discoveries. In open-ended optimization tasks, a rollout group often contains multiple syntactic or parametric variants of the same feasible heuristic algorithm. Because group-relative objectives assign credit at the level of individual samples, these repeated non-winning variants can collectively contribute more gradient mass than a stronger solver that is discovered only once. As a result, the update reflects not only relative solution quality, but also the sampling frequency of different behavioral modes. We refer to this mismatch as frequency dominance.

We propose RiVER, a Ranking-induced VERifiable reinforcement learning framework for training LLMs from executable optimization tasks without ground-truth solutions. RiVER converts uncalibrated execution feedback into stable policy-gradient signals through two simple design choices. First, it performs instance-wise ranking: for each hidden test instance, candidate solvers are compared only against other candidates evaluated on the same instance. This removes arbitrary score-scale effects while preserving relative solution quality. Second, RiVER applies winner-heavy reward shaping: the best candidate in the group is separated from non-winning candidates, while valid non-winners still receive bounded non-binary feedback. This reduces the influence of repeated suboptimal modes and focuses the update on the strongest solver discovered within the sampled group.

This formulation extends RLVR beyond the answer-matching regime. Instead of asking whether a response equals a known solution, RiVER asks whether one generated solution is better than another under an executable evaluator. The resulting supervision is ground-truth-free, non-binary, and verifiable: it requires no human labels, no reference solution, and no optimal answer, yet still provides fine-grained comparisons among sampled candidates. More broadly, it suggests that optimization problems can be used not only as target benchmarks, but also as training environments for improving general reasoning and coding abilities.

We instantiate RiVER on AtCoder Heuristic Contest tasks and evaluate its effects on both score-based and exact-solution programming benchmarks. Training is performed only on open-ended optimization tasks without ground-truth solutions. Nevertheless, RiVER improves performance not only on score-based algorithm engineering evaluation, but also on conventional pass/fail coding benchmarks such as LiveCodeBench and USACO. In contrast, baselines that optimize raw execution scores improve score-based ratings but fail to transfer consistently to exact-solution benchmarks. These results indicate that the benefit does not come merely from exposing the model to more executable feedback; rather, proper reward calibration is crucial for turning score-based environments into transferable supervision.


\section{Related Work}

\paragraph{Reinforcement Learning for LLM Reasoning} 
Incentivizing LLMs to reason explicitly through multi-step process have given rise to large reasoning models(LRMs) that excel on challenging tasks including mathematics \citep{shao2024deepseekmath, guo2025deepseek}, coding \citep{huang2025opencoder, liu2025rstar, hui2024qwen2}, agentic systems \citep{wu2025webdancer, li2025websailor}. 
For tasks with verifiable outcomes, RLVR is a prominent paradigm, optimizing model outputs with deterministic rule-based correctness signals. However, prevailing methods \citep{schulman2017proximal, shao2024deepseekmath, yu2025dapo} heavily depends on large volumes of in-domain data, limiting applicability in regimes where labeled data is insufficient or expensive to collect.

\paragraph{Reward Design and Exploration}

Long-horizon reasoning in LLMs introduces a challenge for reward design in RL due to the sparsity and low informativeness of outcome-level supervision. Binary outcome rewards provide limited guidance for attributing the final result to individual reasoning steps in multi-step CoT trajectories and frequently induce premature convergence to similar trajectories, resulting in diversity collapse \citep{yao2025diversity, cheng2025reasoning}. Prior work has explored step-level supervision through process reward models (PRMs) \citep{setlur2024rewarding, zhang2025lessons, she2025r, zhang2024entropy}, improved credit assignment mechanisms in single-turn \citep{wang2024math, jiao2024learning, lightman2023let, li2025attention} and multi-turn \citep{wang2025steca, feng2025group, xiong2024watch} settings, as well as entropy-based objectives to encourage exploration \citep{wang2025beyond, cui2025entropy, cheng2025reasoning}. Recent work has also investigated improving learning signal quality through adaptive training environments, where task difficulty is dynamically adjusted to match model capability in verifiable settings \citep{zeng2025rlve}.  However, these approaches either rely on costly step-level labels or leave the reward signal coarse. Furthermore, while relative ranking rewards have been widely studied in reinforcement learning from human feedback (RLHF), where models are optimized using pairwise or listwise comparisons between responses, how to incorporate such idea into RLVR is underexplored \citep{rafailov2023direct, li2024process}. We instead adopt dense relative ranking rewards with a co-evolutionary training framework, enabling fine-grained supervision to consistently learn from solution quality differences without human labels while preserving exploration and diverse reasoning behaviors.

\paragraph{Reasoning via Optimization} 
In contrast to standard reasoning benchmarks such as question answering or mathematical problem solving, optimization tasks requires models to produce feasible solutions while simultaneously maximizing solution optimality \citep{yang2025nondeterministic, fan2024nphardeval}. Owing to the computational complexity and evaluation challenges, this class of problems has remained relatively underexplored, especially in training-based settings of LLM reasoning. Recent efforts have been made to employ LLMs as end-to-end training-based solvers for NP-hard problems \citep{jiang2024llmopt, li2025np, jiang2025large, wang2025graph}, learning heuristics to improve solution quality or feasibility \citep{yang2025heuragenix, chen2025heurigym, ye2024reevo, wu2025efficient},  and integrating LLMs into existing optimization pipelines at inference time \citep{li2025strcmp}. These methods typically treat optimization as the target task and focus on task-specific evaluation metrics. In contrast, our work uses NP-hard optimization problems as a training environment for general reasoning rather than an end goal. The algorithmic structure and continuous objective signals inherent in optimization naturally provide dense learning signals for RL training, and eventually lead to transferable improvements on out-of-domain reasoning tasks. 

\section{Preliminaries}

\paragraph{Verifiable Rewards} Let $\pi_{\theta}$ be a parameterized LLM  policy that generates a sequence of tokens
$\mathbf{o}= (o_{1}, \dots, o_{T})$ conditioned on a prompt question $q$, where $q$ sampled uniformly from the training corpus $\mathcal{D}$.
RLVR assumes access to an outcome-level reward function $\mathcal{R}: \mathcal{Q} \times \mathcal{O} \rightarrow \mathbb{R}$,
which assigns a scalar reward to each prompt--response pair $(q,\mathbf{o})$. In standard verifiable settings, the output of $\mathcal{R}$ is typically binary, i.e., $\mathcal{R}(q,\mathbf{o}) \in \{0,1\}$ indicating whether the generated response satisfies a task-specific verifier such as correctness, successful code execution, or a formal constraint. 
The objective of RLVR is to learn a policy that maximizes the expected reward:%
\begin{equation}
        \max_\theta J(\theta) =
    \mathbb{E}_{q \sim \mathcal{D}}
    \left[
    \mathbb{E}_{\mathbf{o} \sim \pi_\theta(\cdot | q)} [\, \cR(q, \mathbf{o}) \,]
    \right].
\end{equation}
Its gradient can be written as
\begin{equation}
\nabla_{\theta}J(\theta)
=
\mathbb{E}_{q\sim\mathcal{D},\mathbf{o}\sim\pi_{\theta}(\cdot\mid q)}
\left[
\mathcal{R}(q,\mathbf{o})
\nabla_{\theta}\log \pi_{\theta}(\mathbf{o}\mid q)
\right].
\label{eq:policy_gradient}
\end{equation}

\paragraph{RLVR Algorithms}
A commonly used RLVR algorithm is Group Relative Policy Optimization (GRPO)~\citep{shao2024deepseekmath}, a variant of Proximal Policy Optimization (PPO)~\citep{schulman2017proximal}. GRPO removes the need for a learned value model by estimating advantages from the relative rewards of a group of sampled responses. For each prompt $q$, GRPO samples $G$ responses $\{\mathbf{o}_i\}_{i=1}^{G}$ from the old policy $\pi_{\theta_{\mathrm{old}}}$ and assigns each response a group-relative advantage $\hat{A}_i$ based on its reward within the group. In practice, $\hat{A}_i$ is often obtained by normalizing the response-level rewards across the $G$ samples. Its clipped surrogate objective is
\begin{equation}\label{eq:grpo_loss}
\begin{aligned}
    \mathcal{J}_\text{GRPO}(\theta) &= \mathbb{E}_{q \sim \mathcal{D}, \{\mathbf{o}_i\}_{i=1}^G\sim \pi_{\theta_\text{old}}(\cdot\mid q)} 
     \Bigg[ \frac{1}{G}\sum_{i=1}^{G} \frac{1}{|\mathbf{o}_i|}\sum_{t=1}^{|\mathbf{o}_i|} \Big(\min \Big( r_{i,t}(\theta) \hat{A}_{i}, 
      \text{clip} \Big( r_{i,t}(\theta), 1 - \varepsilon, 1 + \varepsilon \Big) \hat{A}_{i} \Big) 
    - \beta D_{\text{KL}}(\pi_{\theta} || \pi_{\text{ref}}) \Big) \Bigg],
\end{aligned}
\end{equation}
where
$
    r_{i,t}(\theta):=\frac{\pi_{\theta}(o_{i,t} \mid q, o_{i,<t})}{\pi_{\theta_{\text{old}}}(o_{i,t} \mid q,o_{i,<t})}
$
is the importance sampling ratio, $\varepsilon$ controls the clipping range, $\beta$ determines the strength of
KL regularization, and $\pi_{\mathrm{ref}}$ denotes a fixed reference policy.

In this paper, we build on the GRPO framework but replace standard binary or raw-score rewards with rank-induced, winner-heavy rewards derived from executable optimization environments. This allows the policy to learn from verifiable comparisons among sampled solutions even when no ground-truth answer is available.

\paragraph{Optimization and NP-hardness.}
An optimization problem asks for a feasible solution that maximizes a task-specific objective, or equivalently minimizes a cost. Unlike exact-answer problems, an optimization task may admit many valid solutions with different qualities, and the globally optimal solution may be unknown. Many such problems are NP-hard, where NP stands for nondeterministic polynomial time: an NP-hard problem is at least as hard as every problem in NP, in the sense that a polynomial-time exact algorithm for it would imply polynomial-time algorithms for all NP problems. In practice, they are often solved by heuristic algorithms, which aim to produce high-quality feasible solutions within limited time but do not guarantee optimality.
Although finding an optimal solution may be intractable, checking whether a candidate output is feasible and computing its objective value are typically efficient. Therefore, heuristic algorithms can be evaluated and compared by their achieved objective scores, even when no optimal ground-truth solution is available.

\section{Method}
\label{sec:method}

This section describes how RiVER turns ground-truth-free optimization tasks into reinforcement learning signals. For each problem, the policy samples a group of executable solvers. Each solver is run on the same hidden test instances and evaluated by a deterministic task-specific evaluator. The evaluator does not provide a gold program, gold output, or optimal solution; it only checks feasibility and returns an objective value for valid outputs. RiVER then converts these objective values into instance-wise ranks and applies winner-heavy reward shaping before optimizing the policy with GRPO.

\subsection{Ground-truth-free Executable Optimization}
\label{sec:training_task}

A training task consists of a problem prompt $q$, a deterministic evaluator $\mathcal{E}$, and a set of hidden test instances $\{T_{m}\}_{m=1}^{M}$. The model generates a response $\mathbf{o}$, which may include intermediate reasoning, but is required to end with a complete executable program. We extract this final program as $\code=\operatorname{Code}(\mathbf{o})$ and execute only $\code$ during evaluation. Running $\code$ on instance $T_{m}$ produces either a failure or a valid output with a task-specific objective value.

We denote the validity indicator by $v(\code,T_{m})\in\{0,1\}$, where $v(\code,T_{m})=0$ if the program crashes, times out, violates output constraints, or produces an invalid format. If $v(\code,T_{m})=1$, the evaluator returns a task-specific objective value. We denote the corresponding objective score by $f(\code,T_{m})$, using a unified larger-is-better convention: for maximization tasks, $f(\code,T_{m})$ is the objective value itself; for minimization tasks, $f(\code,T_{m})$ is the negated objective value. Invalid executions are handled separately in reward shaping.

For each prompt $q$, we sample a group of $G$ candidate responses from the rollout policy, $\mathbf{o}_{1},\ldots,\mathbf{o}_{G} \sim \pi_{\theta_{\mathrm{old}}}(\cdot \mid q)$. We extract the executable programs from the responses as $\code_{i}=\operatorname{Code}(\mathbf{o}_{i})$. We execute each program $\code_{i}$ on the same hidden instances $\{T_{m}\}_{m=1}^{M}$. This yields a validity matrix $V\in\{0,1\}^{G\times M}$, where $V_{i,m}=v(\code_{i},T_{m})$. For valid executions, we record the corresponding objective score as $F_{i,m}=f(\code_{i},T_{m})$. For invalid executions, we set $F_{i,m}=-\infty$ as a bookkeeping convention. This gives a raw score matrix $F\in\bar{\mathbb{R}}^{G\times M}$. The shared hidden instances are important because they allow generated programs to be compared under identical evaluation conditions. The supervision signal is therefore not whether a program matches a reference answer, but whether it produces better feasible outputs than other sampled programs on the same instances.

For example, in a TSP-style minimization task, a program reads a cost matrix and outputs a tour. The evaluator first checks whether the tour visits every city exactly once and whether the program finishes within the time limit. If the output is valid, the evaluator computes the tour cost and uses its negation as the objective score. Many programs may be valid on the same instance, but they can receive different scores depending on tour quality. This is the key difference from exact-answer RLVR: the environment supplies relative solution quality without requiring a ground-truth solution.

\subsection{Why Raw Objective Scores Are Not Sufficient}
\label{sec:reward_motivation}

The evaluator score is group-relatively verifiable, but using it directly as a reinforcement learning reward is unstable. A raw-score baseline would first aggregate scores across hidden instances as
$R_i^{\mathrm{raw}}
=
\frac{1}{M}
\sum_{m=1}^{M} F_{i,m}$,
and then compute group-relative advantages from $\{R_i^{\mathrm{raw}}\}_{i=1}^{G}$. This has two failure modes.

First, raw objective scores are not calibrated across instances. Even within the same task, objective magnitudes can vary with instance size, graph structure, weight scale, or outliers. Directly averaging raw scores can make the reward dominated by instances with larger numerical ranges, rather than by robust improvements in relative solution quality. For example, an instance whose scores span thousands of points can outweigh another instance whose scores span only tens of points, even if the latter better separates good and bad programs. This motivates comparing programs within each hidden instance before aggregating feedback across instances.

Second, group-relative updates can over-reinforce frequent non-winning actions. We illustrate this with a simplified one-step policy-gradient toy example. Suppose the sampled group contains one best action $x$ and $n_{y}$ samples of another action type $y$. Here $x$ represents the strongest program discovered in the group, while $y$ represents a frequent but non-winning strategy, such as a common heuristic template with minor syntactic or parameter variations. Let $\widehat{A}{x}$ be the advantage of $x$, and let $\bar{A}{y}$ be the average advantage of the $n_{y}$ samples of type $y$.
Following Eq.~\eqref{eq:policy_gradient}, and ignoring clipping and KL regularization, the update contribution from the rare best action and the aggregate contribution from the frequent non-winning action type can be written as
\begin{equation}
G_{x}
=
\widehat{A}_{x}
\nabla_{\theta}\log \pi_{\theta}(x\mid q),
\quad
G_{y}\approx
n_{y}\bar{A}_{y}
\nabla_{\theta}\log \pi_{\theta}(y\mid q).
\label{eq:frequency_dominance}    
\end{equation}
Thus, even if each individual sample of type $y$ receives a smaller advantage than $x$, the total update mass of $y$ can be larger when $n_{y}\bar{A}_{y}>\widehat{A}_{x}$. Group-relative optimization assigns advantages at the sample level, so a frequently sampled suboptimal strategy can receive more aggregate learning signal than a rare but stronger program.

\subsection{Rank-induced Winner-heavy Rewards}
\label{sec:river_reward}

RiVER constructs rewards in three steps: instance-wise ranking, winner-heavy shaping, and aggregation across hidden instances.

\paragraph{Instance-wise ranking.}
For each hidden instance $T_m$, we rank the $G$ candidate solvers by their objective scores on that instance. Invalid candidates are handled separately and receive the lowest shaped reward. For valid candidates, let $r_{i,m}\in[1,G]$ denote the rank of solver $c_i$ on instance $T_m$, where smaller rank is better.

We use average-tie ranking. If a set of tied candidates occupies rank positions $i,i+1,\ldots,i+k$, then each tied candidate receives rank $i+k/2$. This midrank rule treats tied candidates as occupying the full tied rank interval, rather than assigning all of them the most favorable rank. It therefore reduces over-crediting duplicated solutions with identical execution scores.

Because ranks are computed separately for each hidden instance, they are invariant to instance-dependent score scales. Any strictly monotone transformation of the objective values on the same instance leaves the ranks unchanged. This removes arbitrary score magnitudes while preserving relative solution quality.

\paragraph{Winner-heavy shaping.}
Instance-wise ranks remove scale dependence, but a uniform rank mapping still treats adjacent ranks equally. In open-ended optimization, the most useful signal is often the separation between the best discovered solver and the rest of the group. We therefore use the following shaped reward for each solver-instance pair:
\begin{equation}
s_{i,m}
=
\begin{cases}
-1,
& \text{if } V_{i,m}=0,\\
1,
& \text{if } V_{i,m}=1 \text{ and } r_{i,m}=1,\\
\mathrm{clip}\left((G-1-r_{i,m})/(G-3),0,1\right)-0.5,
& \text{otherwise},
\end{cases}
\label{eq:winner_heavy_reward}
\end{equation}
Thus, an invalid solver receives $-1$, the best valid solver receives $1$, and valid non-winning solvers receive bounded graded feedback in $[-0.5,0.5]$. Ties are handled by the average-rank rule, so duplicated top-scoring candidates do not all receive the same unique-winner credit.

This shaping keeps useful information among non-winning valid solvers while creating a clear margin between the best candidate and the rest. It therefore reduces the chance that repeated suboptimal heuristics dominate the update merely because they appear many times in the rollout group.

\paragraph{Aggregation and GRPO update.}
We average the shaped rewards across hidden instances as $\bar{s}_i=M^{-1}\sum_{m=1}^{M}s_{i,m}$. The resulting scalar is used as the sample-level advantage in GRPO, namely $\widehat{A}_i=\bar{s}_i$. We keep the standard GRPO clipped objective and KL regularization from Eq.~\eqref{eq:grpo_loss}, but replace binary or raw-score rewards with the rank-induced winner-heavy advantage defined above. Since $\bar{s}_i$ is already bounded by construction, we do not apply an additional reward standardization step.

Overall, RiVER converts executable objective feedback into a reward signal with three properties. It is ground-truth-free because it requires no reference solution. It is scale-invariant because candidates are ranked only within the same hidden instance. It is winner-focused but non-binary because the best candidate receives a separated reward while valid non-winners still provide graded feedback.

\section{Experiments}
\label{sec:exp}

\subsection{Experimental Setup}

\paragraph{Training environments.}
We construct the training set from AtCoder Heuristic Contest (AHC) problems released after the ALE-Bench cutoff~\citep{imajuku2025ale,atcoder}, using AHC047--AHC062 as the candidate pool (16 tasks total). We exclude 4 tasks incompatible with our one-pass setting and train on the remaining 12 tasks.\footnote{AHC048--053, AHC055--060.} This ensures no overlap with ALE-Bench evaluation. Each task contains a problem description, an official evaluator, and a test-instance generator. We provide the example prompts in Appendix~\ref{app:prompt}.

\paragraph{AHC scoring.}
Each AHC task is score-based: the official evaluator executes a submitted program and returns a real-valued objective score. For each training prompt, we evaluate 
each candidate solver on 10 hidden test instances and treat the scalar outputs as execution fitness. For minimization tasks, we negate the objective so that 
all tasks follow a unified larger-is-better convention. 
Invalid outputs, runtime errors, and timeouts are treated 
as failures.

\paragraph{Baselines.}
We evaluate two high-performing recent open-source reasoning models, Qwen3-8B~\citep{qwen3technicalreport} and GLM-Z1-9B-0414~\citep{glm2024chatglm}. For each backbone, we compare the original model with several post-training variants that differ in how execution scores are converted into training signals.

We compare against five reward design variants. \textbf{Raw-GRPO}~\cite{shao2024deepseekmath} applies GRPO directly to raw execution scores aggregated over test instances. \textbf{RS-GRPO} replaces standard group normalization with a risk-sensitive transform~\cite{jiang2025risk,yuksekgonul2026learning}. \textbf{Raw-Binary} converts the aggregated raw score into a winner-take-all signal, where the highest-scoring sample receives reward 1 and all others receive 0. \textbf{Instance-Norm} normalizes raw scores within each test instance separately, then averages the per-instance advantages across instances. \textbf{Rank-uniform} uses the same instance-wise ranking as \textbf{RiVER} but assigns uniformly spaced rewards across all ranks. \textbf{RiVER} is our full method, combining instance-wise rank transformation with winner-heavy reward shaping.

\paragraph{Evaluation benchmarks and metrics.}
We evaluate on two types of benchmarks. For score-based evaluation, we use ALE-Bench~\citep{imajuku2025ale}, a benchmark built from AHC problems that evaluates models on optimization-oriented algorithmic tasks without ground-truth solutions. We report \textbf{Rating}, the AtCoder-style aggregate score, and \textbf{Rank\,\%}, the percentile rank among human participants 
induced by that rating, where lower is better. For exact-solution evaluation, we use LiveCodeBench v5 and v6~\citep{jain2024livecodebenchholisticcontaminationfree}, and USACO~\citep{shi2024languagemodelssolveolympiad}, where solutions are judged by exact test-case correctness.  We report the average Pass@1 accuracy across three independent runs for all benchmarks.

\paragraph{Training details.}
We use AdamW~\cite{loshchilov2017decoupled} as the optimizer for all runs with learning rate $1\times10^{-6}$, KL 
coefficient $0.001$, group size $G=16$, and global batch 
size $64$. 

\subsection{Results and Analysis}
\newcommand{\deltagood}[1]{\textcolor{teal}{\scriptsize #1}}
\newcommand{\deltabad}[1]{\textcolor{gray}{\scriptsize #1}}

\begin{table*}[t]
\centering
\resizebox{\textwidth}{!}{%
\begingroup
\renewcommand{\arraystretch}{1.1}
\setlength{\tabcolsep}{3.5pt}

\begin{tabular}{lccccccccccc}
\toprule
\multirow{3}{*}{\textbf{Model}}
& \multicolumn{2}{c}{\textbf{Score-based contests}}
& \multicolumn{9}{c}{\textbf{Exact-solution benchmarks}} \\
\cmidrule(lr){2-3}\cmidrule(lr){4-12}
& \multicolumn{2}{c}{\textbf{ALE-Bench}}
& \multicolumn{4}{c}{\textbf{LiveCodeBench v5}}
& \multicolumn{4}{c}{\textbf{LiveCodeBench v6}}
& \multirow{2}{*}{\textbf{USACO}} \\
\cmidrule(lr){2-3}\cmidrule(lr){4-7}\cmidrule(lr){8-11}
& \textbf{Rating} $\uparrow$
& \textbf{Rank \%} $\downarrow$
& \textbf{Easy}
& \textbf{Med.}
& \textbf{Hard}
& \textbf{Avg.}
& \textbf{Easy}
& \textbf{Med.}
& \textbf{Hard}
& \textbf{Avg.}
&  \\
\midrule

\textbf{Qwen3-8B}
& 845 & 86.4
& 96.8 & 62.9 & 28.8 & 56.1
& 96.3 & 55.6 & 19.7 & 49.2
& 40.4 \\

\quad Raw-GRPO
& 903 & 82.5
& 95.9 & 64.1 & 29.7 & 56.7
& 96.9 & 58.3 & 17.9 & 49.3
& 40.2 \\

\quad RS-GRPO
& 904 & 82.5
& 95.9 & 59.6 & 31.1 & 55.9
& 96.1 & 59.6 & 20.0 & 50.5
& 40.0 \\

\quad Raw-Binary
& 926 & 80.9
& 95.9 & 62.2 & 27.5 & 55.1
& 96.1 & 55.1 & 19.6 & 49.0
& 41.2 \\

\quad Instance-Norm
& 935 & 80.1
& 95.9 & 62.2 & 28.8 & 55.7
& 96.9 & 56.4 & 19.2 & 49.3
& 39.6 \\

\quad Rank-uniform
& 960 & 78.9
& 96.8 & 61.5 & 31.1 & 56.7
& 95.4 & 59.6 & 17.5 & 49.1
& 39.7 \\

\rowcolor[HTML]{F3F6F9}
\quad RiVER
& \textbf{987}
& \textbf{77.5}
& 96.7
& 67.9
& 30.2
& \textbf{58.3}
& 96.9
& 60.3
& 21.2
& \textbf{51.4}
& \textbf{43.3} \\[-3pt]

\rowcolor[HTML]{F3F6F9}
\multicolumn{1}{r}{\scriptsize $\Delta$}
& \deltagood{+142}
& \deltagood{-8.9}
& \deltabad{-0.1}
& \deltagood{+5.0}
& \deltagood{+1.4}
& \deltagood{+2.2}
& \deltagood{+0.6}
& \deltagood{+4.7}
& \deltagood{+1.5}
& \deltagood{+2.2}
& \deltagood{+2.9} \\

\midrule

\textbf{GLM-Z1-9B-0414}
& 805 & 88.2
& 93.4 & 57.7 & 18.6 & 49.2
& 95.6 & 53.3 & 16.1 & 46.7
& 32.4 \\

\quad Raw-GRPO
& 886 & 83.6
& 95.1 & 66.7 & 20.7 & 53.3
& 95.3 & 52.6 & 19.6 & 48.0
& 32.8 \\

\quad RS-GRPO
& 916 & 81.6
& 94.3 & 61.5 & 17.6 & 50.1
& 96.1 & 59.0 & 17.9 & 49.3
& 33.2 \\

\quad Raw-Binary
& 915 & 81.8
& 93.5 & 62.2 & 24.8 & 53.3
& 96.9 & 57.1 & 18.3 & 49.1
& 32.5 \\

\quad Instance-Norm
& 929 & 80.6
& 95.1 & 64.7 & 22.5 & 53.5
& 94.6 & 55.1 & 17.1 & 47.4
& 33.1 \\

\quad Rank-uniform
& 931 & 80.2
& 92.7 & 62.8 & 20.7 & 51.5
& 96.9 & 53.9 & 18.3 & 48.2
& 31.9 \\

\rowcolor[HTML]{F3F6F9}
\quad RiVER
& \textbf{962}
& \textbf{78.8}
& 96.7
& 64.7
& 24.3
& \textbf{54.7}
& 97.7
& 59.0
& 17.9
& \textbf{49.7}
& \textbf{34.3} \\[-3pt]

\rowcolor[HTML]{F3F6F9}
\multicolumn{1}{r}{\scriptsize $\Delta$}
& \deltagood{+157}
& \deltagood{-9.4}
& \deltagood{+3.3}
& \deltagood{+7.0}
& \deltagood{+5.7}
& \deltagood{+5.5}
& \deltagood{+2.1}
& \deltagood{+5.7}
& \deltagood{+1.8}
& \deltagood{+3.0}
& \deltagood{+1.9} \\

\bottomrule
\end{tabular}
\endgroup
}
\vspace{3pt}
\caption{
Results on score-based and pass/fail programming benchmarks.
For score-based contests, we evaluate on ALE-Bench and report the rating and rating percentile rank.
For exact-solution contests, we evaluate on LiveCodeBench v5, LiveCodeBench v6, and USACO.
LiveCodeBench results are further broken down by difficulty.
}
\label{tab:main}
\end{table*}

Table~\ref{tab:main} reports the main results. RiVER 
improves both backbones on every benchmark even though trained on only 12 open-ended optimization tasks without ground truth solutions. Specifically, RiVER raises the ALE rating by 142 points on Qwen3-8B and 157 points on GLM-Z1-9B-0414, and increases performance of the two backbones across LCB v5, LCB v6, and USACO, by 2.4 points and 3.5 points respectively on average. RiVER is also the only method that improves the Qwen3-8B backbone on all five evaluation metrics, where other raw-score or instance-wise ranking baselines have decreased performance on at least one of the exact-solution benchmarks.

\textbf{Raw executable score is not calibrated enough to optimize directly.} Although raw-score baselines such as Raw-GRPO, RS-GRPO, and Raw-Binary increase the ALE rating, the gains fail to transfer to exact-solution coding benchmarks. Specifically, they show no consistent gain on LCB v5, LCB v6, and USACO for the stronger Qwen3-8B backbone; most either stay flat or drop. This suggests that optimizing raw execution scores is likely teaching the model to exploit instance-specific score magnitudes rather than developing transferable coding behavior. 

\textbf{Instance-wise ranking improves score-based 
performance but does not transfer without winner-anchored 
shaping.}
Instance-Norm and Rank-uniform each improve ALE rating 
but show only trivial gains on exact-solution benchmarks. 
Although neither relies on raw execution score magnitudes, 
performance gains on LCB and USACO are close to zero or 
even degrade when applied to Qwen3-8B. On the other hand, adding winner-anchored shaping raises ALE by another 27 points on Qwen3-8B and 31 
on GLM-Z1-9B compared to Rank-uniform, and improves the LCB and USACO by 2.4\% on average. These results suggest that rewarding the winner more than other candidates while still differentiating among them, 
unlike winner-take-all or uniformly spaced rewards, 
can help develop coding skills that transfer to exact-solution tasks.

\paragraph{Difficulty-wise analysis on LiveCodeBench.}
Among all difficulty levels of LiveCodeBench, RiVER improves for both 
backbones, indicating consistent gains rather than 
improvements concentrated in a narrow set of problems. The gains 
are largest on medium and hard problems. For instance, on LiveCodeBench v5, 
Qwen3-8B improves by 5.0\% and 1.4\% on medium and 
hard problems, and GLM-Z1-9B by 7.0\% and 5.7\%. Easy problems 
show little movement, as base models already exceed 93\% 
accuracy on this subset. The concentration of gains on 
harder problems suggests that training on open-ended 
optimization tasks helps the model tackle problems that 
require more sophisticated reasoning, rather than simply 
reinforcing skills that models have already mastered.

\subsection{Case Study}

We present a case study of how reward
design affects the solvers discovered during training. We first examine the
best-so-far training dynamics across all 12 AHC training problems. For each
problem, we evaluate sampled solvers on fixed held-out evaluation instances and
track the best score discovered up to each rollout step. As shown in
Figure~\ref{fig:case_study}, RiVER is competitive across tasks and discovers
stronger or earlier best-so-far solvers on 8 problems. This suggests that the benefit of the
rank-induced reward is not limited to a single task, but appears across multiple
heterogeneous optimization environments.

\begin{figure}
    \centering
    \includegraphics[width=1\linewidth]{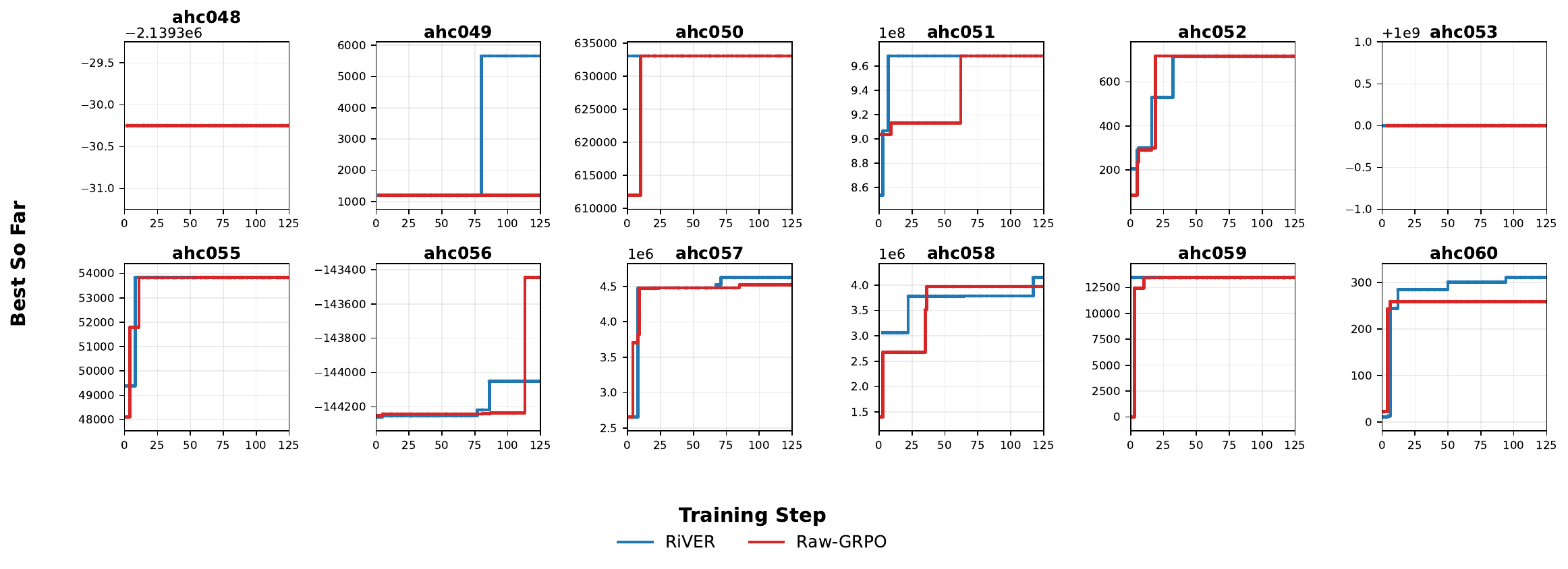}    
    \caption{Best-so-far performance across 12 AHC problems.}
    \label{fig:case_study}
\end{figure}

We then perform a qualitative code inspection on AHC057 to understand what kind
of solver improvements are induced by training. The task description is provided
in Appendix~\ref{app:prompt}, and the generated solvers are provided in
Appendix~\ref{app:ahc057-solvers}.
Before training, the strongest valid base-model rollouts already capture the basic mechanics of AHC057, but use index-based grouping that largely ignores instance geometry. After training, both RiVER and Raw-GRPO shift to static geometric heuristics: they group points by toroidal proximity and connect each group with an MST at $t=0$. RiVER is stronger mainly because its grouping rule is more adaptive: it repeatedly adds the point closest to the current group, whereas Raw-GRPO selects points only by distance to a fixed seed. This produces more compact local clusters and lower within-group connection costs.

\section{Discussion: An Information-Theoretic View of Group-Relative Learning Signals}
\label{sec:gfr}

Our results suggest that score-based optimization tasks are useful not merely because they provide more numerical feedback, but because they expose richer relative distinctions among on-policy samples. In group-relative reinforcement learning, a rollout group is informative only when the sampled responses can be distinguished by the feedback induced by the environment. If all responses in a group receive the same verifier outcome, their relative advantages collapse and the update provides little information about which behavior should be reinforced.

We formalize this intuition using a simple notion of feedback resolution. For a prompt $q$, let
$\mathbf{o}\sim\pi_{\theta_{\mathrm{old}}}(\cdot\mid q)$
be a sampled response, and let
$Z_q = \Phi_q(\mathbf{o})$
denote the feedback induced by a deterministic verifier or executable environment. The feedback $Z_q$ may be binary, scalar-valued, or vector-valued, depending on the environment. In answer-matching RLVR, $Z_q$ often corresponds to a binary correctness signal. In score-based environments, $Z_q$ can represent richer information about solution quality.

\begin{definition}[Group-relative feedback resolution]
\label{def:gfr_main}
Assume the induced feedback distribution is discrete, with
$p_z=\Pr[Z_q=z]$. For a rollout group of size $G>1$, we define the
group-relative feedback resolution of prompt $q$ as the order-$G$ R\'enyi entropy~\citep{renyi1961measures}
\[
\mathrm{GFR}_G(q)
:=
H_G(Z_q)
:=
-\frac{1}{G-1}\log \sum_z p_z^G .
\]
\end{definition}

This quantity measures how much the environment separates on-policy samples into distinguishable feedback values. Binary verifiers have limited feedback resolution, and can be sparse in both low- and high-performance regimes: when the policy is weak, many samples may receive the same failure signal; when the policy is already strong, many samples may receive the same success signal. In both cases, the group contains limited relative information. By contrast, score-based environments can assign multiple feedback levels to different valid or partially valid behaviors, allowing group-relative algorithms to compare candidates by relative quality without requiring a reference solution or an optimal answer.

The connection to group-relative learning can be seen through the probability of feedback collision.

\begin{proposition}[Group feedback collision]
\label{prop:gfr_main}
Let $\mathbf{o}_1,\ldots,\mathbf{o}_G$ be sampled i.i.d. from
$\pi_{\theta_{\mathrm{old}}}(\cdot\mid q)$, and let
$Z_i=\Phi_q(\mathbf{o}_i)$. If the induced feedback distribution is discrete, then
\begin{equation}
    \Pr[Z_1=\cdots=Z_G]
=
\exp\left(-(G-1)\mathrm{GFR}_G(q)\right).
\end{equation}
\end{proposition}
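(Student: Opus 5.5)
The argument is a short computation: partition the collision event by the common feedback value, use independence to factor each piece, and then invert the definition of $\mathrm{GFR}_G(q)$ in Definition~\ref{def:gfr_main}.

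First, since the responses $\mathbf{o}_1,\ldots,\mathbf{o}_G$ are i.i.d.\ draws from $\pi_{\theta_{\mathrm{old}}}(\cdot\mid q)$ and $\Phi_q$ is a deterministic (measurable) map, the feedback values $Z_i=\Phi_q(\mathbf{o}_i)$ are also i.i.d. Each has the same law as $Z_q$, with $\Pr[Z_i=z]=p_z$. This is the only place the sampling assumption enters.

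Second, the collision event decomposes as a disjoint union over the countable support of $Z_q$:
\[
\{Z_1=\cdots=Z_G\}=\bigsqcup_{z}\{Z_1=z,\ldots,Z_G=z\}.
\]
Discreteness is what makes this a countable partition of the event. Countable additivity and independence then give
\[
\Pr[Z_1=\cdots=Z_G]=\sum_z \prod_{i=1}^G \Pr[Z_i=z]=\sum_z p_z^G .
\]

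Third, the definition gives $\mathrm{GFR}_G(q)=-\frac{1}{G-1}\log\sum_z p_z^G$ for $G>1$. Rearranging yields $\sum_z p_z^G=\exp\bigl(-(G-1)\mathrm{GFR}_G(q)\bigr)$. This is well defined because $0<\sum_z p_z^G\le \sum_z p_z=1$; the sum is positive since some $p_z>0$. Combining this with the second step proves the claim.

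The only point that needs care is the second step. If the feedback distribution were not discrete, the collision event could not be written as a countable disjoint union of atoms, and the identity would fail; for example, a continuous score would have collision probability $0$. The hypothesis of the proposition is exactly what rules this out, and convergence of $\sum_z p_z^G$ follows from $p_z^G\le p_z$.
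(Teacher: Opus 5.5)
Your proof is correct and follows the paper's argument: decompose the collision event over the common feedback value, factor each term as $p_z^G$ by independence, and invert the definition of $\mathrm{GFR}_G(q)$. Your remarks on countable additivity and on $0<\sum_z p_z^G\le 1$ (which makes the logarithm well defined) are small points of rigor that the paper leaves implicit.
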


Proposition~\ref{prop:gfr_main} shows that higher feedback resolution reduces the probability that an entire rollout group collapses to a single feedback value. This explains why richer verifiable environments can provide more useful group-relative learning signals than sparse binary verifiers: they are less likely to produce groups in which all samples are indistinguishable under the environment feedback. The proof of Proposition~\ref{prop:gfr_main} is given in Appendix~\ref{app:gfr_proof}.

Feedback resolution alone, however, does not make a reward signal reliable. The ablation results show that dense score feedback improves optimization-oriented ratings more consistently than it transfers to exact-solution benchmarks, suggesting that richness must be paired with calibration. RiVER provides this calibration by converting uncalibrated objective values into instance-wise ranks and applying winner-heavy shaping, preserving relative distinctions while reducing scale dominance and repeated-mode dominance.

This view suggests a practical criterion for constructing ground-truth-free RLVR environments. A useful environment should be verifiable and able to separate on-policy samples into multiple feedback levels. Score-based optimization tasks naturally satisfy this criterion because different candidate behaviors can be compared by objective quality rather than by exact answer matching. RiVER exploits this structure by converting rich but uncalibrated feedback into stable group-relative learning signals. Together with the results in Section~\ref{sec:exp}, this supports the view that score-based optimization tasks can serve as scalable training environments for improving general coding ability without ground-truth solutions.

\section{Conclusion}
We presented RiVER, an RL framework for improving LLMs
in settings where ground-truth answers or optimal
solutions are unavailable. Rather than treating verifiability as exact
answer matching, RiVER exploits a broader class of executable environments
in which candidate solutions can be checked for feasibility and compared by
objective value. 
Empirically, training on AHC tasks
improves performance not only on score-based algorithm engineering
benchmarks, but also on pass/fail programming benchmarks. 
These results suggest that moving beyond answer
matching toward relative, executable, and ground-truth-free verification is
a promising direction for scalable RL of LLMs.

\bibliographystyle{unsrtnat}
\bibliography{main}

\newpage
\clearpage
\appendix

\section{Proof of Group Feedback Collision}
\label{app:gfr_proof}

\begin{proof}[Proof of Proposition~\ref{prop:gfr_main}]
Since $\mathbf{o}_1,\ldots,\mathbf{o}_G$ are sampled independently from
$\pi_{\theta_{\mathrm{old}}}(\cdot\mid q)$, the induced feedback values
$Z_1,\ldots,Z_G$ are i.i.d. with
$\Pr[Z_i=z]=p_z$. Therefore, the event that all sampled responses receive the same feedback value can be decomposed over all possible feedback values:
\[
\Pr[Z_1=\cdots=Z_G]
=
\sum_z
\Pr[Z_1=z,\ldots,Z_G=z].
\]
By independence,
\[
\Pr[Z_1=z,\ldots,Z_G=z]
=
\prod_{i=1}^{G}\Pr[Z_i=z]
=
p_z^G.
\]
Thus,
\[
\Pr[Z_1=\cdots=Z_G]
=
\sum_z p_z^G.
\]
By Definition~\ref{def:gfr_main},
\[
\mathrm{GFR}_G(q)
=
-\frac{1}{G-1}\log\sum_z p_z^G.
\]
Rearranging gives
\[
\sum_z p_z^G
=
\exp\left(-(G-1)\mathrm{GFR}_G(q)\right).
\]
Therefore,
\[
\Pr[Z_1=\cdots=Z_G]
=
\exp\left(-(G-1)\mathrm{GFR}_G(q)\right),
\]
which proves the claim.
\end{proof}

\section{AHC057 Problem Description and Training Prompt}
\label{app:prompt}

This appendix provides the task context used in the AHC057 case study. The
training prompt asks the model to generate a complete Python solver for an
AtCoder Heuristic Contest style optimization problem. Unlike exact-match
programming tasks, this problem is score-based: many solvers may be valid, but
they differ in objective value.

\noindent\textbf{System Prompt}
\begin{promptbox}
You are a top-tier algorithm engineer solving an AtCoder Heuristic Contest style optimization problem.

Your goal is to produce a valid, executable, score-seeking Python solver under strict execution limits.

\medskip
Output Format:

You MUST end your response with a complete, runnable Python code block wrapped in \texttt{python ...} .

After you output the final Python code block, do not output anything else.
\end{promptbox}

\medskip
\noindent\textbf{User Prompt}
\begin{promptbox}

\noindent\textbf{Problem: Story}

AtCoder Laboratory is engaged in the development of new molecules. Takahashi, a genius chemist, can freely bond atoms that move around at any timing he chooses to form molecules. However, bonding atoms consumes energy depending on the distance between them, so he wants to create the required number of molecules using as little energy as possible.

\medskip
\noindent\textbf{Problem Statement}

Consider a two-dimensional plane defined by \(0 \leq x < L = 10^5\) and \(0 \leq y < L = 10^5\). This plane has a toroidal structure, meaning the left and right edges (\(x = 0\) and \(x = L\)), as well as the top and bottom edges (\(y = 0\) and \(y = L\)), are connected. Any coordinates outside this range are normalized to fall within \(0 \leq x,y < L\) by taking the remainder when divided by \(L\) for both \(x\) and \(y\). For example, the position reached by moving \((-20000,30000)\) from \((10000,90000)\) is \((90000,20000)\).

There are \(N\) points on this plane. At time \(t = 0\), the initial position \((x_i,y_i)\) \((0 \leq x_i,y_i < L)\) and initial velocity \((vx_i,vy_i)\) \((-100 \leq vx_i,vy_i \leq 100)\) of the \(i\)-th point \((0 \leq i < N)\) are given as input. In the initial state, each point forms an independent connected component.

At each time \(t = 0,1,\ldots,T-1\), the following two phases are processed in order: 1. Bonding Phase and 2. Movement Phase.

\medskip
\noindent\textbf{1. Bonding Phase}

At time \(t\), it is possible to bond two points that belong to different connected components. Multiple bonds may be performed in the same time step.

Let \((x_i,y_i)\) be the position of point \(i\) at time \(t\), and \((x_j,y_j)\) be the position of point \(j\) at time \(t\). The bonding cost \(D\) for bonding point \(i\) and point \(j\) is calculated using the following distance formula:
\begin{equation}
\tiny
\label{eq:prompt_bond_cost}
\begin{aligned}
D
&=
\mathrm{round}
\left(
\sqrt{
\bigl(\min(L-\Delta x,\Delta x)\bigr)^2
+
\bigl(\min(L-\Delta y,\Delta y)\bigr)^2
}
\right),
\\
\Delta x &= |x_i-x_j|,
\qquad
\Delta y = |y_i-y_j|.
\end{aligned}
\end{equation}

When two points are bonded, the velocity of the resulting connected component is updated according to the law of conservation of momentum. Suppose that before the bond, point \(i\) belongs to connected component \(A\) moving at velocity \((vx_A,vy_A)\), and point \(j\) belongs to connected component \(B\) moving at velocity \((vx_B,vy_B)\). Then, for all points belonging to connected components \(A\) and \(B\), the velocity \((vx_{\text{new}},vy_{\text{new}})\) after bonding is updated as follows:
\begin{equation}
\label{eq:prompt_velocity_update}
\begin{aligned}
vx_{\text{new}}
&=
\frac{|A|\times vx_A + |B|\times vx_B}
{|A|+|B|},
\\
vy_{\text{new}}
&=
\frac{|A|\times vy_A + |B|\times vy_B}
{|A|+|B|}.
\end{aligned}
\end{equation}

Here, \(|A|\) and \(|B|\) denote the number of points in connected components \(A\) and \(B\), respectively. After bonding, all points in the resulting connected component will move with the same velocity. The positions of the points do not change due to bonding. Additionally, the order in which bonds are performed within the same time step does not affect the resulting positions, bonding costs, or velocities of the connected components.

Coordinates and velocities may become fractional, but all computations are performed using double-precision floating-point arithmetic.

\medskip
\noindent\textbf{1. Movement Phase}

The positions of all points in each connected component are updated simultaneously. Let \((x_i,y_i)\) be the position of point \(i\) at time \(t\), and let \((vx_i,vy_i)\) be the velocity of the connected component to which point \(i\) belongs. Then, the position \((x_i',y_i')\) of point \(i\) at time \(t+1\) is updated as follows:
\begin{equation}
\label{eq:prompt_movement}
\begin{aligned}
x_i' &= (x_i + vx_i) \bmod L,
\\
y_i' &= (y_i + vy_i) \bmod L.
\end{aligned}
\end{equation}

Plan the bonds so that at time \(T\), the number of connected components is exactly \(M\), and the size of each connected component is exactly \(K\), while minimizing the total bonding cost \(D\).

\medskip
\noindent\textbf{Example of Bonding and Movement}

The figure above illustrates an example of bonding three points into a single connected component. First, the two points at the lower left and lower right are bonded, and their direction of movement changes upward. Next, they are bonded with the point moving to the right at the top, and the direction of movement changes to upward-right.

\medskip
\noindent\textbf{Scoring}

Let the total bonding cost of all bonds be \(D_{\text{sum}}\). The score for a single test case is calculated as follows. A higher score is better.
\begin{equation}
\label{eq:prompt_score}
\mathrm{Score}
=
\mathrm{round}
\left(
10^6
\times
\log_2
\left(
\frac{L\times(N-M)}
{D_{\text{sum}}+1}
\right)
\right).
\end{equation}

The result will be WA in the following cases:
\begin{itemize}[leftmargin=1.5em,itemsep=0.2em]
    \item If the output is invalid
    \item If at time \(T\), the number of connected components is not exactly \(M\), or the size of each connected component is not exactly \(K\)
    \item If a bond is specified between two points that already belong to the same connected component
\end{itemize}

There are 150 test cases, and the score of a submission is the total score for each test case. If your submission produces an illegal output or exceeds the time limit for some test cases, the submission itself will be judged as WA or TLE, and the score of the submission will be zero. The highest score obtained during the contest will determine the final ranking, and there will be no system test after the contest. If more than one participant gets the same score, they will be ranked in the same place regardless of the submission time.

\medskip
\noindent\textbf{Benchmark Interface}

In this benchmark, the official contest input is already parsed before your solver is called. Implement the typed \texttt{solve(...)->List[str]} signature shown below instead of reading from standard input.

Solver arguments:
\begin{itemize}[leftmargin=1.5em,itemsep=0.2em]
    \item \texttt{point\_count}: \(N\)
    \item \texttt{time\_steps}: \(T\)
    \item \texttt{target\_components}: \(M\)
    \item \texttt{target\_component\_size}: \(K\)
    \item \texttt{torus\_size}: \(L\)
    \item \texttt{points}: the full list of \([x_i,y_i,vx_i,vy_i]\) records
\end{itemize}

The parsed instance still satisfies the official constraints:
\begin{itemize}[leftmargin=1.5em,itemsep=0.2em]
    \item The total number of points is \(N = 300\).
    \item The total number of steps is \(T = 1000\).
    \item The target number of connected components is \(M = 10\).
    \item The target size of each connected component is \(K = 30\).
    \item The side length of the space is \(L = 10^5\).
    \item Each position satisfies \(0 \leq x_i,y_i < L\).
    \item Each velocity satisfies \(100 \leq vx_i,vy_i \leq 100\).
\end{itemize}

Return value:
\begin{itemize}[leftmargin=1.5em,itemsep=0.2em]
    \item Return the official output as \texttt{List[str]}, where each element is one stdout line.
    \item Output exactly \texttt{point\_count - target\_components} bond operations.
    \item Each output line must contain \texttt{t i j}, meaning that points \(i\) and \(j\) are bonded at time \(t\).
    \item The evaluator processes bonds in ascending order of time \(t\).
    \item Every output line must satisfy \(0 \leq t < \texttt{time\_steps}\), \(0 \leq i,j < \texttt{point\_count}\), and \(i \neq j\).
\end{itemize}

\medskip
\noindent\textbf{Input Generation} $\mathrm{rand}(L,U)$:
Randomly generates an integer between $L$ and $U$,
 inclusive, with uniform probability.

\medskip
\noindent\textbf{Generating Initial Positions}

For each point \(i\), independently generate
\begin{equation}
\label{eq:prompt_init_pos}
\begin{aligned}
x_i &= \mathrm{rand}(0,L-1),
\\
y_i &= \mathrm{rand}(0,L-1).
\end{aligned}
\end{equation}

\medskip
\noindent\textbf{Generating Initial Velocities}

For each point \(i\), independently generate
\begin{equation}
\label{eq:prompt_init_vel}
\begin{aligned}
vx_i &= \mathrm{rand}(-100,100),
\\
vy_i &= \mathrm{rand}(-100,100).
\end{aligned}
\end{equation}

\medskip
\noindent\textbf{Tools (Input generator and visualizer)}

\begin{itemize}[leftmargin=1.5em,itemsep=0.2em]
    \item Web version: This is more powerful than the local version providing animations.
    \item Local version: You need a compilation environment of Rust language.
    \item Pre-compiled binary for Windows: If you are not familiar with the Rust language environment, please use this instead.
\end{itemize}

Please be aware that sharing visualization results or discussing solutions/ideas during the contest is prohibited.

\medskip
\noindent\textbf{Benchmark Interface:}

\begin{itemize}[leftmargin=1.5em,itemsep=0.2em]
    \item The official contest input has already been parsed into the typed arguments of the \texttt{solve(...)->List[str]} function shown below.
    \item Use those arguments directly instead of reading from standard input.
    \item Return the solver output as \texttt{List[str]}, where each element is one stdout line.
    \item Do not call \texttt{input()}, \texttt{sys.stdin.read()}, or use print-driven main logic.
\end{itemize}

\medskip
\noindent\textbf{Constraints:}

\begin{enumerate}[leftmargin=1.5em,itemsep=0.2em]
    \item Execution time must be under 5 seconds.
    \item Your code will be scored by the quality of the solution it produces. Higher-quality solutions receive higher rewards.
    \item Use the specific function signature provided below.
    \item The returned output must strictly follow the official output format.
\end{enumerate}

\begin{lstlisting}[style=appendixcode]
from typing import List

def solve(
    point_count: int,
    time_steps: int,
    target_components: int,
    target_component_size: int,
    torus_size: int,
    points: List[List[int]],
) -> List[str]:
    """
    Args:
        point_count: Number of points `N`.
        time_steps: Total number of time steps `T`.
        target_components: Required component count `M`.
        target_component_size: Required component size `K`.
        torus_size: Side length `L` of the torus.
        points: `[x_i, y_i, vx_i, vy_i]` records for all points.

    Returns:
        Output lines exactly as they should appear on stdout.
        Each element of the returned list is one line.
    """
\end{lstlisting}

\end{promptbox}

\section{AHC057 Solver Listings}
\label{app:ahc057-solvers}

This appendix lists the representative Python solvers analyzed in the AHC057 case study. Each solver follows the interface
\texttt{solve(point\_count, time\_steps, target\_components, target\_component\_size, torus\_size, points) -> List[str]}.

\subsection{RiVER-Best Solver}
\label{app:average-best-code}

\begin{lstlisting}[style=appendixcode, caption={RiVER-best AHC057 solver.}]
import math
from typing import List

def solve(
    point_count: int,
    time_steps: int,
    target_components: int,
    target_component_size: int,
    torus_size: int,
    points: List[List[int]],
) -> List[str]:
    def compute_min_distance(x1, y1, x2, y2, L):
        dx = abs(x1 - x2)
        dx = min(dx, L - dx)
        dy = abs(y1 - y2)
        dy = min(dy, L - dy)
        return math.sqrt(dx*dx + dy*dy)
    
    available = list(range(point_count))
    groups = []
    
    for _ in range(target_components):
        if not available:
            break
        current_group = [available[0]]
        available.pop(0)
        while len(current_group) < target_component_size:
            min_dist = float('inf')
            best_idx = -1
            for candidate in available:
                current_min = float('inf')
                for point in current_group:
                    d = compute_min_distance(points[point][0], points[point][1], points[candidate][0], points[candidate][1], torus_size)
                    if d < current_min:
                        current_min = d
                if current_min < min_dist:
                    min_dist = current_min
                    best_idx = candidate
            if best_idx == -1:
                break
            current_group.append(best_idx)
            available.remove(best_idx)
        groups.append(current_group)
    
    bonds = []
    for group in groups:
        edges = []
        for i in range(len(group)):
            for j in range(i + 1, len(group)):
                p1 = group[i]
                p2 = group[j]
                x1, y1 = points[p1][0], points[p1][1]
                x2, y2 = points[p2][0], points[p2][1]
                dx = abs(x1 - x2)
                dx = min(dx, torus_size - dx)
                dy = abs(y1 - y2)
                dy = min(dy, torus_size - dy)
                dist = math.sqrt(dx*dx + dy*dy)
                edges.append((dist, p1, p2))
        edges.sort()
        parent = {node: node for node in group}
        def find(u):
            while parent[u] != u:
                parent[u] = parent[parent[u]]
                u = parent[u]
            return u
        def union(u, v):
            root_u = find(u)
            root_v = find(v)
            if root_u != root_v:
                parent[root_v] = root_u
        mst_edges = []
        for dist, u, v in edges:
            if find(u) != find(v):
                union(u, v)
                mst_edges.append((u, v))
        for u, v in mst_edges:
            bonds.append((0, u, v))
    
    return [f"0 {u} {v}" for (t, u, v) in bonds]
\end{lstlisting}

\subsection{Raw-GRPO-Best Solver}
\label{app:grpo-best-code}

\begin{lstlisting}[style=appendixcode, caption={Raw-GRPO-best AHC057 solver.}]
import math
from typing import List

def solve(
    point_count: int,
    time_steps: int,
    target_components: int,
    target_component_size: int,
    torus_size: int,
    points: List[List[int]],
) -> List[str]:
    """
    Args:
        point_count: Number of points `N`.
        time_steps: Total number of time steps `T`.
        target_components: Required component count `M`.
        target_component_size: Required component size `K`.
        torus_size: Side length `L` of the torus.
        points: `[x_i, y_i, vx_i, vy_i]` records for all points.

    Returns:
        Output lines exactly as they should appear on stdout.
        Each element of the returned list is one line.
    """
    if target_components * target_component_size != point_count:
        return []

    M = target_components
    K = target_component_size
    L = torus_size

    remaining_points = list(range(point_count))
    groups = []

    # Group points into M groups of K points each
    for _ in range(M):
        # Select the first point as seed
        seed = remaining_points[0]
        group = [seed]
        remaining_points.pop(0)

        # Select K-1 points closest to the seed
        for _ in range(K - 1):
            if not remaining_points:
                break
            min_dist = float('inf')
            min_idx = -1
            for idx in remaining_points:
                x1, y1 = points[seed][0], points[seed][1]
                x2, y2 = points[idx][0], points[idx][1]
                dx = abs(x1 - x2)
                dx = min(dx, L - dx)
                dy = abs(y1 - y2)
                dy = min(dy, L - dy)
                dist = math.sqrt(dx**2 + dy**2)
                if dist < min_dist:
                    min_dist = dist
                    min_idx = idx
            if min_idx != -1:
                group.append(min_idx)
                remaining_points.remove(min_idx)
        groups.append(group)

    # Function to generate MST edges for a group
    def generate_mst_edges(group):
        edges = []
        for i in range(len(group)):
            for j in range(i + 1, len(group)):
                x1, y1 = points[group[i]][0], points[group[i]][1]
                x2, y2 = points[group[j]][0], points[group[j]][1]
                dx = abs(x1 - x2)
                dx = min(dx, L - dx)
                dy = abs(y1 - y2)
                dy = min(dy, L - dy)
                dist = math.sqrt(dx**2 + dy**2)
                edges.append((dist, group[i], group[j]))
        # Kruskal's algorithm
        parent = {}
        for node in group:
            parent[node] = node

        def find(u):
            while parent[u] != u:
                parent[u] = parent[parent[u]]
                u = parent[u]
            return u

        def union(u, v):
            root_u = find(u)
            root_v = find(v)
            if root_u != root_v:
                parent[root_v] = root_u
                return True
            return False

        edges.sort()
        mst_edges = []
        for dist, u, v in edges:
            if find(u) != find(v):
                union(u, v)
                mst_edges.append((u, v))
        return mst_edges

    # Collect all bonds
    bonds = []
    for group in groups:
        mst_edges = generate_mst_edges(group)
        for u, v in mst_edges:
            bonds.append((0, u, v))

    # Generate output
    output = []
    for t, u, v in bonds:
        output.append(f"{t} {u} {v}")
    return output
\end{lstlisting}

\subsection{Qwen3-8B Solver}
\label{app:grpo-first-code}

\begin{lstlisting}[style=appendixcode, caption={Qwen3-8B AHC057 solver.}]
from typing import List

def solve(
    point_count: int,
    time_steps: int,
    target_components: int,
    target_component_size: int,
    torus_size: int,
    points: List[List[int]],
) -> List[str]:
    """
    Args:
        point_count: Number of points N.
        time_steps: Total number of time steps T.
        target_components: Required component count M.
        target_component_size: Required component size K.
        torus_size: Side length L of the torus.
        points: `[x_i, y_i, vx_i, vy_i]` records for all points.

    Returns:
        Output lines exactly as they should appear on stdout.
        Each element of the returned list is one line.
    """
    M = target_components
    K = target_component_size
    L = torus_size

    # Preassign groups
    groups = [[] for _ in range(M)]
    for i in range(point_count):
        group = i // K
        groups[group].append(i)

    # Initialize components
    components = []
    for i in range(point_count):
        x = points[i][0]
        y = points[i][1]
        vx = points[i][2]
        vy = points[i][3]
        group_id = i // K
        components.append({
            'points': [i],
            'x': x,
            'y': y,
            'vx': vx,
            'vy': vy,
            'group': group_id
        })

    bonds = []

    for time in range(time_steps):
        # Bonding phase
        for group_id in range(M):
            # Collect all components in this group
            group_components = [c for c in components if c['group'] == group_id]
            pairs = []
            for i in range(len(group_components)):
                for j in range(i + 1, len(group_components)):
                    a = group_components[i]
                    b = group_components[j]
                    dx = abs(a['x'] - b['x'])
                    dy = abs(a['y'] - b['y'])
                    min_dx = min(L - dx, dx)
                    min_dy = min(L - dy, dy)
                    distance = (min_dx**2 + min_dy**2)**0.5
                    pairs.append((distance, a, b))
            if pairs:
                # Sort pairs by distance
                pairs.sort()
                min_distance, a, b = pairs[0]
                # Record bond between first point of a and first point of b
                i = a['points'][0]
                j = b['points'][0]
                bonds.append((time, i, j))
                # Merge a and b
                new_points = a['points'] + b['points']
                new_x = a['x']
                new_y = a['y']
                new_vx = (a['vx'] * len(a['points']) + b['vx'] * len(b['points'])) / len(new_points)
                new_vy = (a['vy'] * len(a['points']) + b['vy'] * len(b['points'])) / len(new_points)
                new_group = a['group']
                # Update components list
                new_components = []
                for c in components:
                    if c is a or c is b:
                        continue
                    new_components.append(c)
                new_components.append({
                    'points': new_points,
                    'x': new_x,
                    'y': new_y,
                    'vx': new_vx,
                    'vy': new_vy,
                    'group': new_group
                })
                components = new_components

        # Movement phase
        for c in components:
            c['x'] = (c['x'] + c['vx']) %
            c['y'] = (c['y'] + c['vy']) %

    # Format the bonds
    bond_output = []
    for t, i, j in bonds:
        bond_output.append(f"{t} {i} {j}")

    return bond_output
\end{lstlisting}

\end{document}